\newcommand{\st}{S}
\newcommand{\putaway}[1]{}
\newcommand{\atmset}{\ensuremath{\mathit{Atm}}\xspace}
\protected\def\MSO{\ifmmode \mbox{\sc MSO} \else {\sc MSO}\xspace\fi}
\protected\def\FO{\ifmmode \mbox{\sc FO} \else {\sc FO}\xspace\fi}
\protected\def\MMSO{\ifmmode {\mbox{\sc M}\MSO} \else {{\sc M}\MSO}\xspace\fi}
\protected\def\MFO{\ifmmode \mbox{\sc MFO} \else {\sc MFO}\xspace\fi}
\newcommand{\classbelbase}{\mathbf{M} }
\newcommand{\classbelbaseuniv}{\mathbf{M}_{\mathit{univ}} }
\newcommand{\lang}{\ensuremath{\mathcal{L}}}
\newcommand{\langplus}{\ensuremath{\mathcal{L}}^+}
\newcommand{\union}{\cup}
\newcommand{\suchthat}{\mid}
\newcommand{\limply}{\rightarrow}
\newcommand{\bottom}{\bot}
\newcommand{\lequiv}{\leftrightarrow}
\renewcommand{\phi}{\varphi}
\newcommand{\agtset}{\ensuremath{\textit{Agt}}\xspace}
\newcommand{\AGT}{\mathit{Agt}}
\newcommand{\algofunction}{\textbf{function }}
\newcommand{\algoprocedure}{\textbf{procedure }}
\newcommand{\algofor}{\textbf{for }}
\newcommand{\algodo}{\textbf{do }}
\definecolor{algocommentbackgroundcolor}{rgb}{1,1,0.5}
\newcommand{\algowhile}{\textbf{while }}
\newcommand{\algoif}{\textbf{if }}
\newcommand{\algothen}{\textbf{then }}
\newcommand{\algoendif}{\textbf{endIf }}
\newcommand{\algomatch}{\textbf{match }}
\newcommand{\algocase}{\textbf{case }}
\newcommand{\algoreturn}{\textbf{return }}
\newlength{\algoindentlongueur}
\newcommand{\algoindent}{\hspace*{\algoindentlongueur}}
\newlength{\algoindentavantvrulelongueur}
\newcommand{\algoindentavantvrule}{\hspace*{\algoindentavantvrulelongueur}}
\newlength{\dummy}
\newsavebox{\frameminipageboiteavecunnomsuperlongdesortequonnepuissepaslereutiliser}
\newenvironment{frameminipage}[2][c]{%
\begin{lrbox}{\frameminipageboiteavecunnomsuperlongdesortequonnepuissepaslereutiliser}%
\begin{minipage}[#1]{#2}%
} {%
\end{minipage}%
\end{lrbox}%
\framebox{\usebox{\frameminipageboiteavecunnomsuperlongdesortequonnepuissepaslereutiliser}}%
}
\newenvironment{algo} {
  \begin{frameminipage}{\linewidth}
} {
  \end{frameminipage}
}
\newenvironment{algobloc}{\setlength{\dummy}{\linewidth}\addtolength{\dummy}{- \algoindentlongueur}\addtolength{\dummy}{- \algoindentavantvrulelongueur}\algoindentavantvrule\vrule\algoindent\begin{minipage}{\dummy}}{\end{minipage}}
\newenvironment{algoblocfunction}[1]
{\algofunction #1 \\  \begin{algobloc}}
{\end{algobloc}}
\newenvironment{algoblocfor}[1]
{\algofor #1 \algodo \\  \begin{algobloc}}
{\end{algobloc}}
\newenvironment{algoblocmatch}[1]
{\algomatch #1 \algodo \\  \begin{algobloc}}
	{\end{algobloc}}
\tikzstyle{zoneprogramcounter} = [fill=gray!40, draw=none]
\tikzstyle{zonedatacounter} = [fill=gray!40, draw=none, decorate, decoration={snake, amplitude=1pt, segment length=4pt}]
\tikzstyle{portion} = [densely dotted, shade, top color = white, bottom color = gray!40]
\tikzstyle{portiondata} = [densely dotted, shade, top color = white, bottom color = gray!40, decorate, decoration={snake, amplitude=1pt, segment length=4pt}]
\tikzset{
	double arrow/.style args={#1 colored by #2 and #3}{
		-stealth,line width=#1,#2, % first arrow
		postaction={draw,-triangle 90 cap,#3,line width=(#1)-4pt,
			shorten <=4pt,shorten >=4}, % second arrow
	}
}
\tikzstyle{tapearrow} = [double arrow=10pt colored by black!50!white and black!20!white, -triangle 90 cap, fill = white]%
\tikzstyle{execarrow} = [line width=1pt,->]%
\tikzstyle{world}=[inner sep=0.5mm]
\tikzstyle{event}=[fill=gray!30, inner sep=0.5mm]
\tikzstyle{realworldarrowfromleft} = [initial left, initial text={}]
\tikzstyle{realworldarrowfromright} = [initial right, initial text={}]
\protected\def\DTIME{\ifmmode \mbox{\sc Dtime} \else {\sc Dtime}\xspace\fi}
\protected\def\NTIME{\ifmmode \mbox{\sc Ntime} \else {\sc Ntime}\xspace\fi}
\protected\def\DSPACE{\ifmmode \mbox{\sc Dspace} \else {\sc Dspace}\xspace\fi}
\protected\def\NSPACE{\ifmmode \mbox{\sc Nspace} \else {\sc Nspace}\xspace\fi}
\protected\def\NP{\ifmmode \mbox{\sc NP} \else {\sc NP}\xspace\fi}
\protected\def\coNP{\ifmmode \mbox{\sc coNP} \else {\sc coNP}\xspace\fi}
\protected\def\NPSPACE{\ifmmode \mbox{\sc NPspace} \else {\sc NPspace}\xspace\fi}
\protected\def\PSPACE{\ifmmode \mbox{\sc Pspace} \else {\sc Pspace}\xspace\fi}
\protected\def\EXPSPACE{\ifmmode \mbox{\sc Expspace} \else {\sc Expspace}\xspace\fi}
\protected\def\TWOEXPSPACE{\ifmmode \mbox{\sc 2Expspace} \else {\sc 2Expspace}\xspace\fi}
\protected\def\PTIME{\ifmmode \mbox{\sc P} \else {\sc P}\xspace\fi}
\protected\def\NPTIME{\ifmmode \mbox{\sc NP} \else {\sc NP}\xspace\fi}
\protected\def\EXPTIME{\ifmmode \mbox{\sc Exptime} \else {\sc Exptime}\xspace\fi}
\protected\def\AEXPTIME{\ifmmode \mbox{\sc Aexptime} \else {\sc Aexptime}\xspace\fi}
\protected\def\NEXPTIME{\ifmmode \mbox{\sc NExptime} \else {\sc NExptime}\xspace\fi}
\protected\def\2EXPTIME{\ifmmode \mbox{\sc 2-Exptime} \else {\sc
		2-Exptime}\xspace\fi}
\DeclareRobustCommand{\kEXPTIME}[1][k]{\ifmmode \mbox{\sc $#1$-Exptime}
	\else {\sc $#1$-Exptime}\xspace\fi}
\DeclareRobustCommand{\kNEXPTIME}[1][k]{\ifmmode \mbox{\sc $#1$-NExptime}
	\else {\sc $#1$-NExptime}\xspace\fi}
\DeclareRobustCommand{\kEXPSPACE}[1][k]{\ifmmode \mbox{\sc $#1$-Expspace}
	\else {\sc $#1$-Expspace}\xspace\fi}
\protected\def\ELEMENTARY{\ifmmode \mbox{\sc Elementary} \else {\sc Elementary}\xspace\fi}
\protected\def\AEXPpol{\ifmmode \mbox{{\sc A}_{\text{pol}}\EXPTIME} \else
	{\sc A}$_{\text{pol}}$\EXPTIME\fi}
\protected\def\APTIME{\ifmmode \mbox{\sc Aptime} \else {\sc Aptime}\xspace\fi}
\protected\def\AEXPSPACE{\ifmmode \mbox{\sc Aexpspace} \else {\sc Aexpspace}\xspace\fi}
\newcommand{\gloups}[1]{\bigcirc}
\tikzstyle{cell} = [draw,minimum height=5mm,minimum width=5mm]
\tikzset{
	cellcolor/.cd,
	0/.style={fill=gray!20!white},
	1/.style={fill=yellow!20!white}
}
\tikzstyle{cellalive} = [fill=yellow!20!white]
\newcommand\listeventsempty\epsilon
\definecolor{C}{rgb}{1,1,1}
\definecolor{C }{rgb}{0.8,0.8,0.8}
\definecolor{Ca}{rgb}{0.95,1,0.5}
\definecolor{Cb}{rgb}{1,0.9,0.5}
\definecolor{Cw_2}{rgb}{0.95,1,0.5}
\definecolor{Cw_n}{rgb}{1,0.9,0.5}
\definecolor{Cq'-}{rgb}{0.5,0.5,1}
\definecolor{C-q_0}{rgb}{0.5,0.5,1}
\definecolor{C-q_1}{rgb}{0.5,0.4,1}
\definecolor{C-q'}{rgb}{0.5,0.8,0.5}
\definecolor{Cq_f-}{rgb}{0.4,0.9,0.5}
\definecolor{Cq_0,w_1}{rgb}{1,0.6,0.5}
\definecolor{Cq_0,}{rgb}{1,0.6,0.4}
\definecolor{Cq_2,}{rgb}{0.8,0.9,0.4}
\definecolor{Cq_2,a}{rgb}{0.8,0.9,0.4}
\definecolor{Cq_f,a}{rgb}{0.7,0.9,0.4}
\definecolor{Cq_0,a}{rgb}{1,0.6,0.5}
\definecolor{Cq_0,b}{rgb}{1,0.5,0.5}
\definecolor{Cq_1,a}{rgb}{1,0.5,0.4}
\definecolor{Cq_1,b}{rgb}{1,0.4,0.4}
\definecolor{Cq,a}{rgb}{1,0.5,0.5}
\definecolor{Cq',a}{rgb}{1,0.5,0.5}
\definecolor{Cq',b}{rgb}{1,0.5,0.5}
\definecolor{Cq_0,1}{rgb}{0.5,0.5,0.5}
\newcommand{\expbel}[1] {\triangle_{#1}   }
\newcommand{\impbel}[1] {\Box_{#1}  }
\newcommand*{\impbelmost}[1]{\Box_{#1}^{\small\complement} }
\newcommand{\impbelposs}[1] {\Diamond_{#1}  }
\newcommand{\impbelmostposs}[1] {\Diamond_{#1}^{\small\complement}  }
\newcommand{\impbelonly}[1] {\Box_{#1}^o   }
\newcommand{\bnf}{::=}
\newcommand*{\prop}{p}
\newcommand{\fraglang}{ \mathcal{L}_{0} }
\newcommand{\stateval}{\mathit{V}}
\newcommand{\belbaseset}{\mathit{B}}
\newcommand{\relstate}[1]{\mathcal{R}_{#1}}
\newcommand*{\relstatecomp}[1]{\mathcal{R}_{#1}^\complement}
\newcommand{\iconstraint}{\mathit{Cxt}}
\newcommand{\setbelbase}{\mathbf{S} }
\newcommand{\defin}{~\stackrel{\mbox{\scriptsize def}}{=}~} % =def
\newcommand{\supervarphi}{\varphi_0}
    \newcommand{\relepist}[1]{\mathcal{R}_{#1}}
        \newcommand{\relepistcomp}[1]{\mathcal{R}_{#1}^\complement}
\newcommand{\inJELIAversiononly}[1]{\nottoggle{long}{#1}{}}
\newcommand{\inlongversiononly}[1]{\iftoggle{long}{#1}{}}
\newcommand{\setbelbasecustom}{\setbelbase_{\setrelevantformulas}}
\newcommand{\setrelevantformulas}{\Gamma}
\begin{document}
\title{Base-based Model Checking\\ for Multi-Agent Only Believing\inlongversiononly{\\(long version)}%
    \thanks{%
        This work is partially supported by the project epiRL (“Epistemic Reinforcement Learning”) ANR-22-CE23-0029,
        the project CoPains (“Cognitive Planning in Persuasive Multimodal Communication”) ANR-18-CE33-0012
        and
        the AI Chair project Responsible AI (ANR-19-CHIA-0008) both from the French National Agency of Research.
        Support from the Natural Intelligence Toulouse Institute (ANITI) is also gratefully acknowledged.%
    }%
}
%\titlerunning{Abbreviated paper title}
% If the paper title is too long for the running head, you can set
% an abbreviated paper title here
%
\author{Tiago de Lima\inst{1}%\orcidID{000}
    \and Emiliano Lorini\inst{2}%\orcidID{000}
    \and Fran\c{c}ois Schwarzentruber\inst{3}%\orcidID{000}%
}
\authorrunning{T. de Lima et al.}
% First names are abbreviated in the running head.
% If there are more than two authors, 'et al.' is used.
%
\institute{CRIL, Univ Artois and CNRS, Lens, France, \email{delima@cril.fr}
      \and IRIT, CNRS, Toulouse University, France, \email{lorini@irit.fr}
      \and ENS Rennes, France, \email{francois.schwarzentruber@ens-rennes.fr}}
    
\maketitle

\begin{abstract}
\iftoggle{long}{%
We present a novel semantics for the language of multi-agent only believing exploiting belief bases,
and show how to use it for automatically checking formulas of this language
and of its dynamic extension with private belief expansion operators. 
We provide a PSPACE algorithm for model checking relying  on a reduction to QBF
and alternative dedicated algorithm relying on the exploration of the state space.
We present an implementation of the QBF-based algorithm and some experimental results on computation time in a concrete example.
}{% %% ELSE OF \iftoggle{long}
We present a novel semantics for the language of multi-agent only believing exploiting belief bases,
and show how to use it for automatically checking formulas of this language.
We provide a PSPACE algorithm for model checking relying on a reduction to QBF,
an implementation and some experimental results on computation time in a concrete example.
} %% END OF \iftoggle{long}
\end{abstract}
\section{Introduction}

Using belief bases for building a semantics for epistemic logic was initially proposed by Lorini \cite{LoriniAAAI2018,LoriniAI2020}.
In \cite{DBLP:journals/corr/abs-1907-09114} it was shown that such a semantics allows to represent the concept of universal epistemic model which is tightly connected with the concept of universal type space studied by game theorists \cite{UnivZamir}.
A qualitative version of the universal type space with no probabilities involved is defined by Fagin et al. \cite{FaginUniv2} (see also \cite{FaginUniv1}).
Broadly speaking, a universal epistemic model for a given situation is the most general model which is compatible with that situation.
It is the model which only contains information about the situation and makes no further assumption.
From an epistemic point view, it can be seen as the model with maximal ignorance with respect to the description of the situation at stake.

Such a universal epistemic model has been shown to be crucial for defining a proper semantics for the concept of multi-agent only knowing (or believing)  \cite{DBLP:conf/ijcai/Lakemeyer93,DBLP:conf/aaai/Halpern93}, as a generalization of the concept of single-agent only knowing (or believing) \cite{DBLP:journals/ai/Levesque90}.\footnote{%
    As usual, the difference between knowledge and belief lies in the fact that the former is always correct while the latter can be incorrect.%
}
However, the construction of this semantics is far from being straightforward.
Halpern \& Lakemeyer \cite{DBLP:journals/logcom/HalpernL01} use the proof-theoretic notion  of canonical model for defining it.
The limitation of the canonical model is its being infinite thereby not being exploitable in practice.
In a more recent work, Belle \& Lakemeyer \cite{DBLP:conf/kr/BelleL10} provided an inductive proof-independent definition of the semantics for multi-agent only knowing  which departs from the standard semantics of multi-agent epistemic logic based on multi-relational Kripke structures. 
Finally, Aucher \& Belle  \cite{DBLP:conf/ijcai/AucherB15} have shown how to interpret  the language of multi-agent only knowing on standard Kripke structures. 
Although being independent from the proof theory, these last two accounts are fairly non-standard or quite involved.
They rely either on an inductive definition (Belle \& Lakemeyer) or on a complex syntactic representation up to certain modal depth (Aucher \& Belle) of the  multi-agent epistemic structure used for interpreting the multi-agent only knowing language.
 
In this paper, we concentrate on the logic of multi-agent only believing based on the logic K for beliefs.
We show how to use the belief base semantics and its construction of the universal model to automatically check formulas of the corresponding language.
The novel contribution of the paper is twofold:
\begin{itemize}
    \item Although the idea of using belief bases as a semantics for epistemic logic has been proposed  in previous work, this is the first attempt to use them in     the context of the logic of multi-agent only believing\inlongversiononly{ and of its extension with private belief expansion operators}.
    \item Moreover, we are the first to provide a model checking algorithm for the logic of multi-agent  only believing, to implement it and to test it                 experimentally on a concrete example. The belief base semantics helped us to accomplish this task given its compactness and manageability. 
\end{itemize}

\paragraph{Outline.}
In Section~\ref{sec:langsem}, we first recall the belief base semantics introduced in our previous work \cite{LoriniAAAI2018,LoriniAI2020}.
We  show how to interpret the language of multi-agent only believing and how to define the universal model in it. 
\inlongversiononly{%
In Section~\ref{sec:example}, we introduce an example to illustrate the framework.%
}
In Section~\ref{sec:mc}, we move to model checking formulated in the belief base semantics.
\inlongversiononly{%
We provide a PSPACE algorithm for model checking relying on a reduction to QBF.%
}
In Section~\ref{sec:expres}, we present an implementation of the QBF-based algorithm and some experimental results on computation time in the example.
\inlongversiononly{%
In Section~\ref{sec:dynamic} we propose an extension of the setting with private belief expansion operators, and demonstrate that the model checking problem remains in PSPACE.%
}
Section~\ref{sec:conclusion} concludes.%
\inJELIAversiononly{\footnote{The extended version of this paper, including proofs and examples, is available at ArXiv \url{}.}}

\section{Language and semantics}%
\label{sec:langsem}

The multi-agent epistemic language introduced in \cite{LoriniAI2020} has two basic epistemic modalities:
one for explicit belief,
and another one for implicit belief. 
An agent's explicit belief corresponds to a piece of information in the agent's belief base.
An agent's implicit belief corresponds to a piece of information that is derivable from the agent's explicit beliefs. 
In other words, if an agent can derive $\varphi$ from its explicit beliefs, it implicitly believes \emph{at least} that $\varphi$ is true.
We consider the extension of this epistemic language by complementary modalities for implicitly believing \emph{at most}.
The  \emph{at least} and \emph{at most} modalities can be combined to represent the concept of \emph{only} believing.

The semantics over which the language is interpreted exploits belief bases.
Unlike the standard multi-relational Kripke semantics for epistemic logic in which the agents' epistemic accessibility relations over possible worlds are given as primitive, in this semantics they are computed from the agents' belief bases. 
 Specifically, in this semantics it is assumed that at state $S$ an agent considers a state $S'$ possible
 (or state $S'$ is epistemically accessible to the agent at state $S$)
if and only if $S'$ satisfies all formulas included in the agent's  belief base at $S$. 
This idea of computing the agents' accessibility relations from the state description is shared with the semantics of epistemic logic based on interpreted systems  \cite{Fagin1995,LomuscioRaimondi2015}.
However, there is an important difference.
While the interpreted system semantics relies on the abstract notion of an agent's local state, in the belief base semantics an agent's local state is identified with its concrete belief base. 

\subsection{Semantics}

Assume
a countably infinite set of atomic propositions $\atmset = \{p,q, \ldots \}$
and
a finite set of agents $\agtset = \{ 1, \ldots, n \}$.
We define the language $\fraglang$ for
explicit belief 
by the following grammar in Backus-Naur Form (BNF):
\[ \fraglang\defin \alpha \bnf \prop
                          \mid \lnot\alpha
                          \mid \alpha \land \alpha
                          \mid \expbel{i}\alpha, \]
where $p$ ranges over $\atmset$
and $i$ ranges over $\agtset$.
$\fraglang$ is the language used to represent explicit beliefs.
The formula $\expbel{i} \alpha$ reads ``agent $i$ has the explicit belief that $\alpha$''.
In our semantics, a state is not a primitive notion but it is decomposed into different elements:
one belief base per agent
and an interpretation of propositional atoms. 

\begin{definition}[State]%
\label{state}
A state is a
tuple $\st = \big( (\belbaseset_i)_{i \in \agtset},\stateval \big)$
where $\belbaseset_i \subseteq \fraglang$
is agent $i$'s belief base, 
and 
$\stateval \subseteq \atmset$ is the actual environment.
The set of all states is noted $\setbelbase$.
\end{definition}

The following definition specifies truth conditions for formulas in $\fraglang$.

\begin{definition}[Satisfaction relation]%
\label{satrel}
Let 
$\st = \big( (\belbaseset_i)_{i \in \AGT}, \stateval \big) \in \setbelbase$.
Then,
\begin{eqnarray*}
    \st \models \prop & \Longleftrightarrow & \prop \in \stateval,\\
    \st \models \lnot\alpha & \Longleftrightarrow & \st \not\models \alpha,\\
    \st \models \alpha_1 \land \alpha_2 & \Longleftrightarrow & \st \models \alpha_1 \text{ and } \st \models \alpha_2,\\
    \st \models \expbel{i} \alpha & \Longleftrightarrow & \alpha \in \belbaseset_i.
\end{eqnarray*}
\end{definition}

Observe in particular the set-theoretic interpretation of the explicit belief operator in the previous definition:
agent $i$ has the explicit belief that $\alpha$
if and only if $\alpha$ is included in its  belief base.

The following definition introduces the agents' epistemic relations.
They are computed  from the agents' belief bases. 
\begin{definition}[Epistemic relation]%
\label{DefAlternative}
Let $i \in \AGT$.
Then,
$\relstate{i}$ is the binary relation on $\setbelbase$
such that,
for all
$\st = \big( (\belbaseset_i)_{i \in \AGT}, \stateval\big)
, 
\st' = \big( (\belbaseset_i')_{i \in \AGT}, \stateval'\big)
\in \setbelbase$,
we have
$\st \relstate{i} \st' \text{ if and only if } \forall \alpha \in \belbaseset_i : \st' \models \alpha$.
\end{definition} 

$\st \relstate{i} \st'$ means that $\st'$ is an epistemic alternative for agent $i$ at $\st$,
that is to say,
$\st'$ is a state that at $\st$ agent $i$ considers possible.
The idea of the previous definition is that $\st'$ is an epistemic alternative for agent $i$ at $\st$
if and only if,
$\st'$ satisfies all facts that agent $i$ explicitly believes at $\st$.
 
The following definition introduces the concept of model, namely a state supplemented with a set of states, called \emph{context}. 
The latter includes all states that are compatible with the agents' common ground,
i.e., the body of information that the agents commonly believe to be the case \cite{StalnakerCommonGround}. 

\begin{definition}[Model]%
\label{univM}
A model is a pair $(\st,\iconstraint)$
with $\st \in \setbelbase$
and $\iconstraint \subseteq \setbelbase$. 
The class of models is noted $\classbelbase$.
\end{definition}

Note that in a model $(\st,\iconstraint)$, the state $\st$ is not necessarily an element of the context $\iconstraint$ due to the fact that we model belief instead of knowledge.
Therefore, the agents' common ground represented by the context $\iconstraint$ may be incorrect  and not include the actual state. 
If we modeled knowledge instead of belief, we would have to suppose that $\st \in \iconstraint$. 

Let $\setrelevantformulas = (\setrelevantformulas_i)_{i \in \AGT}$
where, for every $i\in \AGT$, $\setrelevantformulas_i$
represents agent $i$'s vocabulary.
A $\setrelevantformulas$-universal model is a model containing all states at which an agent $i$'s explicit beliefs are built from its vocabulary $\setrelevantformulas_i$. 
In other words, an agent's vocabulary plays a role analogous to that of the notion of awareness in the formal semantics  of awareness \cite{Fag87}. 
The notion of $\setrelevantformulas$-universal model is defined as follows. 

\begin{definition}[Universal model]%
\label{univM2}
The model $(\st,\iconstraint)$
in $\classbelbase$
is said to be $\setrelevantformulas$-universal
if $\st \in \iconstraint=\setbelbasecustom$,
with
$
\setbelbasecustom=\Big\{ 
  \big( (\belbaseset_i')_{i \in \agtset},\stateval'\big)
\in \setbelbase\suchthat \forall i \in \agtset, 
\belbaseset_i' \subseteq \setrelevantformulas_i
\Big\}.
$
The class of $\setrelevantformulas$-universal models
is noted $\classbelbaseuniv(\setrelevantformulas)$.
\end{definition}

$\setrelevantformulas = (\setrelevantformulas_i)_{i \in \AGT}$ is also called agent vocabulary profile.
Clearly, when $\setrelevantformulas = \fraglang^n$, we have  $\setbelbasecustom=\setbelbase$. 
A model  $(\st,\setbelbase)$ in $\classbelbaseuniv(\fraglang^n)$ is a model with maximal ignorance:
it only  contains the information provided by the actual  state $\st$. 
For simplicity, we write $\classbelbaseuniv$ instead of $\classbelbaseuniv(\fraglang^n)$. 

\subsection{Language}

In this section,
we introduce a language for implicitly believing \emph{at most} and implicitly believing \emph{at least} on the top of the language $\fraglang$ defined above.
It is noted  $\lang$ and defined by:
\[ \lang \defin \phi \bnf \alpha
                     \mid \neg\phi
                     \mid \phi\land\phi
                     \mid  \impbel{i}\phi
                     \mid \impbelmost{i}\phi, \]
where $\alpha$ ranges over $\fraglang$
and $i $ ranges over $\AGT $.
The other Boolean constructions $\top$, $\bot$, $\lor$, $\limply$, $\oplus$, and $\lequiv$ are defined from $\alpha$, $\neg$ and $\land$ in the standard way.

The formula $ \impbel{i}\phi$ is read
``agent $i$ \emph{at least} implicitly believes that $\varphi$'',
while $ \impbelmost{i}\phi$ is read
``agent $i$ \emph{at most} implicitly believes that $\neg \varphi$''. 
Alternative readings of formulas
$\impbel{i}\phi$ and $\impbelmost{i}\phi$ are, respectively,
``$\phi$ is true at all states that agent $i$ considers possible''
and
``$\phi$ is true at all states that agent $i$ does not consider possible''.
The latter is in line with the reading of the normal modality and the corresponding ``window'' modality in the context of Boolean modal logics \cite{Gargov1990}.
The duals of the operators $\impbel{i}$ and $\impbelmost{i}$ are defined in the usual way, as follows:
$\impbelposs{i}\phi \defin \neg\impbel{i}\neg\phi$
and
$\impbelmostposs{i}\phi \defin \lnot\impbelmost{i}\lnot\phi$.
Formulas in the language $\lang$ are interpreted relative to a model $(\st,\iconstraint)$.
(Boolean cases are omitted since they are defined as usual.) 
  
\begin{definition}[Satisfaction relation (cont.)]%
\label{truthcond2}
Let $(\st,\iconstraint) \in \classbelbase$. Then:
\begin{eqnarray*}
    (\st,\iconstraint) \models \alpha & \Longleftrightarrow & \st \models \alpha,\\
    (\st,\iconstraint)\models \impbel{i}\phi & \Longleftrightarrow
        & \forall \st' \in  \iconstraint : \text{if } \st \relstate{i} \st'
            \text{then } (\st',\iconstraint) \models \phi,\\
    (\st,\iconstraint) \models \impbelmost{i}\phi & \Longleftrightarrow
        & \forall \st' \in \iconstraint : \text{if } \st \relstatecomp{i} \st'
            \text{then } (\st',\iconstraint) \models \phi,
\end{eqnarray*}
with $\relstatecomp{i} = (\setbelbase \times \setbelbase) \setminus \relstate{i}$. 
\end{definition}

Note that $\st \relstatecomp{i} \st'$ just means that at state $\st$ agent $i$ does not consider state $\st'$ possible.
Moreover, interpretations of the two modalities $\impbel{i}$ and $ \impbelmost{i}$ are restricted to the actual context $\iconstraint$. 
The only believing modality ($\impbelonly{i}$)
is defined as follows: 
\begin{align*}
    \impbelonly{i}\varphi & \defin \impbel{i}\phi \land \impbelmost{i} \lnot\phi. 
\end{align*}
    
Notions of satisfiability and validity of $\lang$-formulas for the class of models $\classbelbase$ are defined in the usual way:
$\phi$ is satisfiable if there exists $(\st,\iconstraint) \in \classbelbase$ such that $(\st,\iconstraint) \models \phi$,
and
$\varphi$ is valid if $\lnot\phi$ is not satisfiable. 
    
\inlongversiononly{%
In \cite[Theorem 26]{DBLP:journals/corr/abs-1907-09114}, it is shown that when restricting to the fragment of the language $\lang$ where formulas containing explicit beliefs are disallowed
(i.e., in the definition of $\lang$ $\alpha $ is replaced by $p$), 
the set of satisfiable formulas relative to the class $\classbelbaseuniv$ is the same as the set of satisfiable formulas relative to the class of qualitative universal type spaces,
as defined in \cite{FaginUniv2}.
The latter is similar to the class of $k$-structures, as defined by Belle \& Lakemeyer \cite{DBLP:conf/kr/BelleL10}.%
\footnote{%
    Although it has not been formally proven, we believe that Belle \& Lakemeyer's semantics and Fagin et al.'s semantics are nothing but different formulations of the same class of universal epistemic structures.%
}
In Section \ref{sec:mc}, we will show that $\setrelevantformulas$-universal models of Definition \ref{univM2} provide an adequate and compact semantics for model checking formulas of the language $\lang$. 
But, before delving into model checking, we illustrate our language with the help of an example.%
} %% END OF \inlongversiononly
    
\inlongversiononly{%
\section{Example}%
\label{sec:example}

We give two variants of the example, the first
focused on first-order beliefs and the second
focused on second-order beliefs.

\begin{example}%
\label{eg:committee}
Agents in $\AGT$ are members of a selection committee
for an associate professor position. They have to choose which candidates to admit to the second round of selection  consisting  in an interview. Committee members  and candidates
work in the same scientific community. 
Therefore, it is possible that they  co-authored some papers in the past.
Assume there are $m$ candidates $\mathit{Cand}=\{c_1, \ldots, c_m\}$.
In order to formalize the example we use atomic propositions of the
form $\mathsf{vote}(i{,}c )$,
with $i \in \AGT$
and $c \in \mathit{Cand}$,
standing for
 ``agent $i$ votes for candidate $c$''.
 The first two rules
 of the game
 state that each committee member must vote for exactly
 one candidate (at least one candidate and no more than one):
\begin{align*}
    & \alpha_1 \defin \bigwedge_{i \in \AGT}%
        \bigvee_{c \in \mathit{Cand}}\mathsf{vote}(i{,}c),\\
    & \alpha_2 \defin \bigwedge_{i \in \AGT}\bigwedge_{c,c' \in \mathit{Cand}, c\neq c'}%
        \big(  \mathsf{vote}(i{,}c) \rightarrow \neg \mathsf{vote}(i{,}c') \big).
\end{align*}
The third rule states
that a member of the committee cannot vote for a candidate with whom she/he co-authored an article in the past:
 \begin{align*}
    & \alpha_3 \defin \bigwedge_{i \in \AGT}
        \bigwedge_{c \in \mathbf{f}(i)} \neg\mathsf{vote}(i{,}c),
\end{align*}
where $\mathbf{f}: \AGT \longrightarrow 2^{\mathit{Cand}}$
is a function mapping each member 
of the committee to her/his co-authors.
A candidate $c$ is admitted to the interview if and only if at least one member of the committee has voted for her/him. This is expressed by the
following abbreviation:
\begin{align*}
    & \mathsf{adm}(c )\defin  \bigvee_{i \in \AGT}\mathsf{vote}(i{,}c ) .
\end{align*}
Let us consider the variant of the example
in which
the evaluation committee and
the set of candidates 
have the same cardinality
and a committee member co-authored an article with only her/his matching candidate in the linear order. That is, we suppose:
\begin{align*}
  & |\AGT|=|\mathit{Cand}|>2, \text{ and }\\
  & \forall i \in \AGT ,
    \mathbf{f}(i)=\{c_i\}.
\end{align*}
Furthermore,
we suppose that
(i)
each  committee member
except the last one votes
for her/his next candidate 
in the linear order,
while
the last committee member $n$ votes for her/his previous
candidate $n-1$; 
(ii) the vote by a committee member is secret
(i.e., a committee member only has epistemic  access to 
her/his vote),
(iii) all committee members know the results
of the selection, namely, which candidates are admitted to the interview and which are not. The three hypotheses (i), (ii) and (iii) are fully expressed by the state 
$\st_0 = \big( (\belbaseset_i)_{i \in \AGT}, \stateval\big)$
such that,
for every $1 \leq i < n$, 
\begin{align*}
    \belbaseset_i= & \big\{
        \mathsf{vote}(i{,}c_{i+1}),
        \neg\mathsf{adm}(c_1),
        \mathsf{adm}(c_2), \ldots , \mathsf{adm}(c_n),
        \alpha_1, \alpha_2, \alpha_3
    \big\},
\end{align*}
and moreover, 
\begin{align*}
    \belbaseset_n= & \big\{ 
        \mathsf{vote}(n{,}c_{n-1}),
        \neg\mathsf{adm}(c_1), 
           \mathsf{adm}(c_2),
        \ldots , \mathsf{adm}(c_n),
        \alpha_1, \alpha_2, \alpha_3 
    \big\},\\
    \stateval = & \big\{
        \mathsf{vote}(1{,}c_{2}),
        \ldots, 
        \mathsf{vote}(n-1{,}c_{n}) , \mathsf{vote}(n{,}c_{n-1})
    \big\}.
\end{align*}
The following holds
when 
$|\AGT|=|\mathit{Cand}|=3$:
\begin{equation*}
    (\st_0,\setbelbasecustom) \models  \varphi_0,
\end{equation*}
with
\begin{align*}
    \varphi_0 \defin & \impbelonly{1}\psi_1
                 \land \bigwedge_{i \in  \{2,3\}} \impbelonly{i}\psi_2,
    \\
    \psi_1 \defin &
    \begin{aligned}[t]
        %\impbelonly{1} \Big(
        & %\big(
            \mathsf{vote}(1{,}c_{2}) 
            \land \neg\mathsf{vote}(1{,}c_{1})
            \land \neg\mathsf{vote}(1{,}c_{3})
        %\big)
        \\
        & \land %\big(
            \mathsf{vote}(2{,}c_{3})
            \land \neg\mathsf{vote}(2{,}c_{1})
            \land \neg\mathsf{vote}(2{,}c_{2})
        %\big)
        \\
        & \land %\big(
            \mathsf{vote}(3{,}c_{2}) 
            \land \neg\mathsf{vote}(3{,}c_{1})
            \land \neg\mathsf{vote}(3{,}c_{3}),
        %\big),
    %\Big)
    \end{aligned}\\
    %&
    %\land \bigwedge_{i \in  \{2,3\}} \impbelonly{i} \Big(
    \psi_2 \defin &
    \begin{aligned}[t]
        & %\big(
            \neg\mathsf{vote}(1{,}c_{1}) 
            \land
            (\mathsf{vote}(1{,}c_{2} ) \oplus \mathsf{vote}(1{,}c_{3})) 
        %\big)
        \\
        & \land %\big(
            \mathsf{vote}(2{,}c_{3} ) 
            \land \neg\mathsf{vote}(2{,}c_{1})
            \land \neg\mathsf{vote}(2{,}c_{2})
        %\big)
        \\
        & \land %\big(
            \mathsf{vote}(3{,}c_{2}) 
            \land \neg\mathsf{vote}(3{,}c_{1})
            \land \neg\mathsf{vote}(3{,}c_{3}).
        %\big)
    %\Big), 
    \end{aligned}
\end{align*}
and
$\setrelevantformulas_i = \belbaseset_i \union \neg\belbaseset_i$
for every $i \in \AGT$,
(where $\neg\belbaseset_i$ is the set of negations of the formulas in $\belbaseset_i$).
$(\st_0,\setbelbasecustom)$
so defined 
is nothing but a  $\Gamma$-universal
model
in which the agents' vocabularies include all and only those
formulas
in their actual belief bases
and their negations.

This means that,
in the three-agent case,
agent $1$
only 
knows for whom an agent voted 
and for whom 
she/he did not vote,
while 
agent $2$
and agent $3$
only
know
for whom they  voted 
and for whom 
they did not vote, 
and
that agent $1$ voted either for $2$
or for $3$.
Therefore, $2$
and $3$
do not know for whom
$1$
voted. 
Interestingly, 
when 
$|\AGT|=|\mathit{Cand}|>3$:
\begin{align*}
   (\st_0,\setbelbasecustom) \not  \models  \varphi_0.
\end{align*}
\end{example}
   
\newcommand*{\adm}{\mathsf{adm}}

\begin{example}
\label{eg:committee-variant}
It is worth to consider a variant of Example \ref{eg:committee} in which agent $1$ has higher-order explicit beliefs
(i.e., explicit beliefs about other agents' explicit beliefs).
Specifically, we consider a state 
$\st_0' = \big( (\belbaseset_i')_{i \in \AGT}, \stateval' \big)$
such that,
%for every $i \in \AGT$, 
\begin{align*}
%    \belbaseset_i'= &  \belbaseset_i
%    \cup
%    \bigcup_{j \in \AGT} \Big\{
%        \expbel{j} \neg  \mathsf{adm}(c_1), 
%        \expbel{j}\mathsf{adm}(c_2), \ldots, \expbel{j} \mathsf{adm}(c_n),
%        \expbel{j}\alpha_1,\expbel{j} \alpha_2, \expbel{j}\alpha_3
%    \Big\},
    \belbaseset'_1 = & \belbaseset_1
    \cup
    \{ 
        \expbel{2}\lnot\adm(c_1),
        \expbel{2}\adm(c_2),
        \ldots,
        \expbel{2}\adm(c_n),
        \expbel{2}\alpha_1,
        \expbel{2}\alpha_2,
        \expbel{2}\alpha_3
    \}
    \intertext{ and, for every $1 < i \leq n$: }
    \belbaseset_i' = & \belbaseset_i,\\
    \stateval' = & \stateval,
\end{align*}
where $\belbaseset_i$
and $ \stateval$ are defined as above. 
In other words,
%everyone in the committee explicitly knows 
%that everyone in the committee explicitly knows the rules
%of the game as well as the results of the selection. 
committee member $1$ explicitly knows that committee member $2$ explicitly knows the rules of the game as well as the results of the selection.
  
Interestingly, 
when 
$|\AGT|=|\mathit{Cand}|=3$,
the following holds:
%\begin{align*}
%    (\st_0',\setbelbase) \models & 
%        \bigwedge_{i,j\in \AGT} \Big(
%            \impbelposs{i} \varphi_0
%   \land
%    \impbel{i}\impbel{j} \big( 
%        \bigwedge_{ z \in \AGT, 1 < z < n}
%            \mathsf{vote}(z{,}c_{z+1} )
%            \land
%           \mathsf{vote}(n{,}c_{n-1} ) 
%        \big)
%    \Big).
%\end{align*}
\begin{equation*}
    (\st'_0,\setbelbase) \models \impbelonly{2}\psi_2
                           \land \impbel{1}\impbel{2}\psi_2
                           \land \lnot\impbel{1}\impbelonly{2}\psi_2.
\end{equation*}

In words, in the three-agent case, at $\st_0'$,
%every committee member   considers the fact $\varphi_0$ possible.
%Moreover,  she/he has a second-order implicit belief about a first-order implicit belief:
%everyone can infer from her/his explicit beliefs that everyone can correctly infer for whom an agent $z>1$ voted. 
committee member $2$ only knows that $\psi_2$,
committee member $1$ knows that $2$ knows $\psi_2$,
but $1$ does not know that $2$ only knows that $\psi_2$.
\end{example}
} %% END OF \inlongversiononly

\section{Model checking}%
\label{sec:mc}

The model checking problem is defined in our framework as follows:
\begin{description}
\item[input:]
    an agent vocabulary profile $\setrelevantformulas = (\setrelevantformulas_i)_{i \in \AGT}$ with $\setrelevantformulas_i$ finite for every $i\in \AGT$,
    a finite state $\st_0$ in $\setbelbasecustom$,
    and
    a formula $\phi_0 \in \lang$; 
\item[output:]
    yes if $(\st_0,\setbelbasecustom) \models \phi_0$;
    no otherwise.
\end{description}

\begin{remark}
We suppose w.l.o.g.\ that outer most subformulas of $\supervarphi$ of the form $\triangle_i\alpha$ are such that $\alpha \in \setrelevantformulas_i$.
If this is not the case for some subformulas $\triangle_i \alpha$, then the subformula $\triangle_i\alpha$ will be false anyway and can be replaced by $\bottom$.
\end{remark}

\inlongversiononly{%
\subsubsection{Direct PSPACE algorithm}

\newcommand\mc[3]{mc(#1, #2, #3)}

Figure \ref{figure:genericmc} shows an algorithm $\mc{\st}{\setrelevantformulas}{\phi}$ that checks whether $(\st,\setbelbasecustom) \models \phi$. Note that $\setbelbasecustom$ is not computed explicitly, but implicitly represented by $\setrelevantformulas$. In the algorithm, states $\st$ are represented as vectors of bits indicating for all $i$, for each element $\alpha$ of $\setrelevantformulas_i$ whether $\alpha$ belongs to $i$'s base in $\st$ or not. It also encodes the valuation over atomic propositions appearing in $\setrelevantformulas$ and $\supervarphi$. The states $\st$ manipulated by the algorithm are of size polynomial in the size of the input. The loop \algofor all $\st' \in \setbelbase_0$ such that $\st \relstate i \st'$ works as follows. We consider all the vectors $\st'$. Each such vector $\st'$ represents a state in $\setbelbasecustom$. For each $\st'$ we check in polynomial time whether $\st \relstate i \st'$.

\begin{figure}[ht]
    \begin{center}
 		%\begin{minipage}{8cm}
 			\begin{algo}
 				\begin{algoblocfunction}{$\mc{\st}{\setrelevantformulas}{\phi}$}
 					
 					\begin{algoblocmatch}{$\phi$}
 						
 						\algocase $p$: \algoreturn $p \in \stateval$ %$\st \models p$
 						
 						\algocase $\expbel i \alpha$: \algoreturn $\alpha \in \belbaseset_i$
 						
 						\algocase $\lnot \psi$: \algoreturn not $\mc \st \setrelevantformulas \psi$
 						
 						\algocase $\psi_1 \land \psi_2$: \algoreturn $\mc \st \setrelevantformulas {\psi_1}$ and $\mc \st \setrelevantformulas {\psi_2}$

 						%	\algocase $\triangle^D_G \psi$: \algoreturn $\psi \in B_i$

 						%			\algocase $\lknowexplicit G  \alpha$: \algoreturn $\genericalgorithminmerge \alpha B G$
 						\algocase $\impbel i \psi$:
 						
 						\begin{algobloc}
 							
 							\begin{algoblocfor}{all $\st'\in
 									\setbelbasecustom$ such that $\st \relstate i \st'$} 
 								\algoif not $\mc {\st'} \setrelevantformulas \psi$	\algoreturn false

 							\end{algoblocfor}
 							
 							\algoreturn true
 							
 						\end{algobloc}
 						
 						\algocase $\impbelmost i \psi$:

 						\begin{algobloc}
 							
 							\begin{algoblocfor}{all $\st' \in
 									\setbelbasecustom$ such that $\st \relstatecomp i \st'$} 
 								\algoif not $\mc {\st'} \setrelevantformulas \psi$	\algoreturn false			
 							\end{algoblocfor}
 							\algoreturn true
 						\end{algobloc}
 					\end{algoblocmatch}
 				\end{algoblocfunction}
 			\end{algo}
 		%\end{minipage}
    \end{center}
\caption{\label{figure:genericmc}Generic algorithm for model checking $\lang$-formulas.}
\end{figure}

\subsubsection{Reduction to TQBF}\label{sec:redTQBF}
} %% END OF \inlongversiononly

\newcommand{\propfortriangle}[2]{x_{#1, #2}}
\newcommand{\setQBFvariableslevel}[1]{X_{#1}}

We propose a reduction to TQBF (true quantified binary formulas).
We introduce TQBF propositional variables $\propfortriangle \alpha k$ for all $\alpha \in \fraglang$ and for all integers $k$.
The variables indexed by $k$ are said to be of level~$k$.
\inlongversiononly{They correspond to the recursive nesting in the procedure $mc$ described in Figure~\ref{figure:genericmc} for the cases $\impbel{i}\phi$ and $\impbelmost{i}\phi$.}
For instance, $\propfortriangle \alpha k$ is true if $\alpha$ is true at some state at depth $k$.
Let $\setQBFvariableslevel {k}$ be the set of formulas of level $k$.
More precisely, $\setQBFvariableslevel {k}$ contains exactly formulas $\propfortriangle {\expbel i \alpha} k$ with $\alpha \in \setrelevantformulas_i$ for any agent $i$, and $\propfortriangle p k$ with $p$ appearing in $\setrelevantformulas$ or~$\supervarphi$.

\newcommand{\QBFrel}{R}

\begin{definition}
We define the function $tr$ that maps any formula of $\lang$ to a QBF-formula by $tr(\supervarphi) := tr_0(\supervarphi)$ with:
\begin{itemize}
    \item $tr_k(\prop) = \propfortriangle \prop k$
    \item $tr_k(\lnot\phi) = \lnot tr_k(\phi)$
    \item $tr_k(\phi \land \psi) = tr_k(\phi) \land tr_k(\psi)$
    \item $tr_k(\expbel{i} \alpha) = \propfortriangle{\expbel{i} \alpha}{k}$
    \item $tr_k(\impbel{i}\phi) = \forall \setQBFvariableslevel{k+1}(\QBFrel_{i,k} \limply tr_{k+1}(\phi))$
    \item $tr_k(\impbelmost{i}\phi) = \forall \setQBFvariableslevel{k+1}(\lnot\QBFrel_{i,k} \limply tr_{k+1}(\phi))$
\end{itemize}
where:
\[ \QBFrel_{i,k} := \bigwedge_{\alpha \in \setrelevantformulas_i} \propfortriangle{\expbel i \alpha}k  \limply tr_{k+1}(\alpha). \]
\end{definition}

\inlongversiononly{The translation $tr_{k}(\impbel{i}\phi)$ corresponds to case $\impbel i \phi$ in the algorithm.}
State $S$ (resp. $S'$) is represented by the truth values of variables in $X_k$ (resp. $X_{k+1}$).
Formula $\QBFrel_{i,k}$ reformulates  $\st \relstate i \st'$.

\newcommand{\descr}[2]{\operatorname{desc}_{#1}(#2)}

\begin{proposition}%
\label{prop:tqbftrans}
Let $\varphi_0 \in \lang$
and $\st_0 = ((B_i)_{i \in \AGT}, V)$.
The following two statements are equivalent:
\begin{itemize}
    \item $(\st_0,\setbelbasecustom) \models \phi_0$ 
    \item
        $\exists \setQBFvariableslevel0 (
            \descr{\st_0}{\setQBFvariableslevel0} \land tr_0(\phi_0)
        )$ is QBF-true,
\end{itemize}
where:
\[ \descr{\st_0}{\setQBFvariableslevel0} :=
         \bigwedge_{i\in\AGT} \left(
            \bigwedge_{\alpha \in B_i} \propfortriangle{\expbel i \alpha}0
            \land
            \bigwedge_{\alpha \in \setrelevantformulas_i \setminus B_i} \lnot 
                \propfortriangle{\expbel i \alpha}0
        \right)
        \land
        \bigwedge_{p \in V} \propfortriangle p 0
        \land
        \bigwedge_{p \not \in V} \lnot \propfortriangle p 0. \]
\end{proposition}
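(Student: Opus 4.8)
## Proof plan

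The plan is to prove a stronger, parametrized statement by induction on the structure of $\phi_0$, from which Proposition~\ref{prop:tqbftrans} follows as the special case $k=0$ with the initial state pinned down by $\descr{\st_0}{\setQBFvariableslevel0}$. Concretely, I would first establish a bijection-style correspondence between states $\st \in \setbelbasecustom$ and truth assignments to the variables in $\setQBFvariableslevel{k}$: given $\st = ((B_i)_{i\in\AGT},V)$, let $\sigma_{\st,k}$ be the assignment making $\propfortriangle{\expbel i \alpha}{k}$ true iff $\alpha \in B_i$ (for $\alpha \in \setrelevantformulas_i$) and $\propfortriangle p k$ true iff $p \in V$. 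Note $\descr{\st_0}{\setQBFvariableslevel0}$ is exactly the formula whose unique satisfying assignment over $\setQBFvariableslevel0$ is $\sigma_{\st_0,0}$. The key lemma to prove by induction is: for every $\phi \in \lang$, every level $k$, and every $\st \in \setbelbasecustom$,
\[
(\st,\setbelbasecustom) \models \phi
\quad\Longleftrightarrow\quad
\sigma_{\st,k} \models tr_k(\phi),
\]
where on the right I evaluate the QBF $tr_k(\phi)$ (whose free variables lie in $\setQBFvariableslevel{k}$) under $\sigma_{\st,k}$.

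The base cases $\phi = p$ and $\phi = \expbel i \alpha$ are immediate from the definition of $\sigma_{\st,k}$ and Definition~\ref{satrel}; the Boolean cases $\lnot$ and $\land$ follow directly from the induction hypothesis since $tr_k$ commutes with these connectives. The interesting case is $\phi = \impbel i \psi$ (and symmetrically $\impbelmost i \psi$). Here $tr_k(\impbel i \psi) = \forall \setQBFvariableslevel{k+1}(\QBFrel_{i,k} \limply tr_{k+1}(\psi))$, so $\sigma_{\st,k} \models tr_k(\impbel i \psi)$ iff for every assignment $\sigma'$ to $\setQBFvariableslevel{k+1}$ extending $\sigma_{\st,k}$, we have $\sigma' \models \QBFrel_{i,k} \limply tr_{k+1}(\psi)$. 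I would argue that assignments $\sigma'$ to $\setQBFvariableslevel{k+1}$ are in bijection with states $\st' \in \setbelbasecustom$ via $\sigma' = \sigma_{\st',k+1}$ (using that $\setQBFvariableslevel{k+1}$ contains exactly the variables $\propfortriangle{\expbel i \alpha}{k+1}$ for $\alpha \in \setrelevantformulas_i$ and $\propfortriangle p {k+1}$ for the relevant atoms, which is precisely the data defining a state of $\setbelbasecustom$ once we agree that bases draw only from the $\setrelevantformulas_i$). The crux is then the claim that $\sigma_{\st,k} \cup \sigma_{\st',k+1} \models \QBFrel_{i,k}$ iff $\st \relstate i \st'$. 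Unfolding, $\QBFrel_{i,k} = \bigwedge_{\alpha \in \setrelevantformulas_i}(\propfortriangle{\expbel i \alpha}{k} \limply tr_{k+1}(\alpha))$; under the combined assignment this holds iff for all $\alpha \in \setrelevantformulas_i$, if $\alpha \in B_i$ then $\sigma_{\st',k+1} \models tr_{k+1}(\alpha)$. Since $\alpha \in \fraglang$ is purely a formula about explicit beliefs and atoms — i.e., a Boolean combination of atoms and $\expbel j \beta$ formulas — a sub-induction (or the already-proven base/Boolean cases of the main lemma applied at level $k+1$, noting that for $\alpha \in \fraglang$ the truth conditions under $(\st',\setbelbasecustom)$ coincide with $\st' \models \alpha$) gives $\sigma_{\st',k+1} \models tr_{k+1}(\alpha)$ iff $\st' \models \alpha$. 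Hence $\QBFrel_{i,k}$ holds iff $\forall \alpha \in B_i : \st' \models \alpha$, which is exactly $\st \relstate i \st'$ by Definition~\ref{DefAlternative} (using $B_i \subseteq \setrelevantformulas_i$). Combining, $\sigma_{\st,k}\models tr_k(\impbel i \psi)$ iff for all $\st' \in \setbelbasecustom$ with $\st \relstate i \st'$ we have $\sigma_{\st',k+1}\models tr_{k+1}(\psi)$, which by the main induction hypothesis is iff $(\st',\setbelbasecustom)\models\psi$ for all such $\st'$, i.e.\ $(\st,\setbelbasecustom)\models\impbel i \psi$ by Definition~\ref{truthcond2}. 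The case $\impbelmost i \psi$ is identical with $\QBFrel_{i,k}$ replaced by $\lnot\QBFrel_{i,k}$ and $\relstate i$ by $\relstatecomp i$.

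Finally, to get the Proposition itself: $\exists\setQBFvariableslevel0(\descr{\st_0}{\setQBFvariableslevel0} \land tr_0(\phi_0))$ is QBF-true iff some assignment $\sigma$ to $\setQBFvariableslevel0$ satisfies both conjuncts; since $\descr{\st_0}{\setQBFvariableslevel0}$ has the unique model $\sigma_{\st_0,0}$, this is iff $\sigma_{\st_0,0}\models tr_0(\phi_0)$, which by the main lemma (at $k=0$, $\st = \st_0$) is iff $(\st_0,\setbelbasecustom)\models\phi_0$. I also need to check the well-formedness caveat flagged in the Remark — outermost $\triangle_i\alpha$ with $\alpha\notin\setrelevantformulas_i$ are replaced by $\bot$, so $tr$ is well-defined — and that $tr_k(\phi)$ only mentions variables of levels $\geq k$ so the quantifier blocks nest correctly and the free-variable bookkeeping above is legitimate. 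The main obstacle I anticipate is getting the state-to-assignment correspondence for $\setQBFvariableslevel{k+1}$ exactly right: one must verify that every Boolean assignment to $\setQBFvariableslevel{k+1}$ genuinely corresponds to a state in $\setbelbasecustom$ (so the $\forall$ over assignments matches the $\forall \st' \in \setbelbasecustom$), which relies on the precise definition of $\setQBFvariableslevel{k+1}$ as containing $\propfortriangle{\expbel i \alpha}{k+1}$ only for $\alpha\in\setrelevantformulas_i$, together with the convention that relevant atoms not appearing elsewhere are harmless. The rest is routine structural induction.
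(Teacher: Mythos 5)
Your proposal is correct and follows essentially the same route as the paper: the same key lemma, proved by structural induction, that $(\st,\setbelbasecustom)\models\phi$ iff the assignment on $\setQBFvariableslevel{k}$ induced by $\st$ satisfies $tr_k(\phi)$, with the modal cases handled via the correspondence between assignments on $\setQBFvariableslevel{k+1}$ and states of $\setbelbasecustom$ and the observation that $\QBFrel_{i,k}$ encodes $\st\relstate{i}\st'$. Your treatment is, if anything, slightly more careful than the paper's on the sub-point that $tr_{k+1}(\alpha)$ evaluates as $\st'\models\alpha$ for $\alpha\in\fraglang$ and on the bookkeeping of atoms outside $\setrelevantformulas$ and $\varphi_0$.
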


\inlongversiononly{%
\begin{proof}
\newcommand{\valuationdescr}[2]{\operatorname{val}_{#1}(#2)}
Let $\valuationdescr{\st}{\setQBFvariableslevel k}$
represent the unique valuation on $X_k$ satisfying $\descr{\st}{\setQBFvariableslevel k}$.
We prove by induction on the structure of $\phi$ that
$%
    %\propP(\phi) \defin
    (\st,\setbelbasecustom) \models \phi
    \text{ iff }
    \valuationdescr{\st}{\setQBFvariableslevel k} \models tr_k(\phi)
$,
for all $k$.

Induction base.
Let $\phi = p$, for some $p \in \atmset$.
We have
$(\st,\setbelbasecustom) \models p$
iff
$p \in V$
iff    
$x_{k, p} \in \valuationdescr{\st}{\setQBFvariableslevel k}$
iff
$\valuationdescr{\st}{\setQBFvariableslevel k} \models tr_k(p)$.

Induction step.
The cases for operators $\lnot$ and $\land$ are straightforward.
We proceed with the modal operators in the language:
\begin{itemize}
\item Let $\phi = \expbel i \alpha$.
    We have:
    $(\st,\setbelbasecustom) \models \expbel i \alpha$
    iff
    $\alpha \in B_i$
    iff
    $x_{k, \expbel i \alpha} \in \valuationdescr{\st}{\setQBFvariableslevel k}$
    iff
    $\valuationdescr{\st}{\setQBFvariableslevel k} \models tr_k(\expbel i \alpha)$.
    
\item
    Let $\phi = \impbel i \psi$.
    We denote by $\valuationdescr{\st}{\setQBFvariableslevel {k}} + \valuationdescr{\st'}{\setQBFvariableslevel {k+1}}$ the valuation obtained by concatenating the valuation $\valuationdescr{\st}{\setQBFvariableslevel {k}}$ and $\valuationdescr{\st'}{\setQBFvariableslevel {k+1}}$ (we take the truth values of propositions in $\setQBFvariableslevel {k}$ from the former and the truth values of propositions in $\setQBFvariableslevel {k+1}$ from the latter).
    We have:
    %\todo{Tiago: Shouldn't we prove that $\st\relepist{i}\st'$ iff $\valuationdescr{\st}{\setQBFvariableslevel{k}}+ \valuationdescr{\st'}{\setQBFvariableslevel{k+1}} \models R_{ik}$}
\begin{align*}
    (\st,\setbelbasecustom) \models  \impbel i \psi
    & \iff
    \text{ for all $\st' \in \setbelbasecustom$, $\st \relepist{i} \st'$ implies } (\st',\setbelbasecustom) \models  \psi
    \\
    & \iff
    \text{ for all $\st' \in \setbelbasecustom$, $\st \relepist{i} \st'$ implies } \valuationdescr{\st'}{\setQBFvariableslevel {k+1}} \models tr_{k+1}(\psi)
    \\
    & \iff
    \text{ for all $\st' \in \setbelbasecustom$}
    \valuationdescr{\st}{\setQBFvariableslevel {k}} + \valuationdescr{\st'}{\setQBFvariableslevel {k+1}} \models R_{i,k} \limply  tr_{k+1}(\psi) \\
    & \iff 
    \valuationdescr{\st}{\setQBFvariableslevel {k}} \models \forall \setQBFvariableslevel {k+1}, R_{i,k} \limply  tr_{k+1}(\psi) \\
    & \iff
    \valuationdescr{\st}{\setQBFvariableslevel k} \models tr_k(\impbel i \alpha)
\end{align*}

\item
    Let $\phi = \impbelmost i \phi$.
    The proof is analogous to that for operator $\impbel{i}$ above.
    In particular, we use relation $\relepistcomp{i}$ and formula $\lnot R_{i,k}$.
\end{itemize}

Therefore, $(\st_0,\setbelbasecustom) \models \phi_0$ iff  $\valuationdescr{\st_0}{\setQBFvariableslevel 0} \models tr_0(\phi)$.
In addition,
$\valuationdescr{\st_0}{\setQBFvariableslevel 0} \models tr_0(\phi)$ is equivalent to $\exists \setQBFvariableslevel0 (
            \descr{\st_0}{\setQBFvariableslevel0} \land tr_0(\phi_0)
        )$ is QBF-true.
%because $\valuationdescr{\st_0}{\setQBFvariableslevel 0}$ is the only valuation satisfying $\descr{\st_0}{\setQBFvariableslevel 0}$.
This concludes the proof.\qed
\end{proof}
} %% END OF \inlongversiononly

\inlongversiononly{%
In  \cite{DBLP:journals/corr/abs-1907-09114}, it is proved that the previous model checking problem formulated in the belief base semantics is PSPACE-hard,
%when considering 
already for the fragment of $\lang$ with only ``at least'' implicit belief operators, but with no ``at most'' implicit belief operators involved. 
Thus, the fact that the generic model checking  problem given in Figure~\ref{figure:genericmc} runs in polynomial space as well as Proposition~\ref{prop:tqbftrans} allow us to state the following complexity result.%
} %% END OF \inlongversiononly 

\begin{theorem}%
\label{theo:comp}
Model checking $\lang$-formulas is PSPACE-complete. 
\end{theorem}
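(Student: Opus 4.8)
The plan is to prove the two inclusions separately. For \PSPACE-membership I would route through Proposition~\ref{prop:tqbftrans}, which reduces a model checking instance $(\setrelevantformulas,\st_0,\phi_0)$ to deciding truth of the quantified Boolean formula $\exists\setQBFvariableslevel0\big(\descr{\st_0}{\setQBFvariableslevel0}\land tr_0(\phi_0)\big)$. First I would check that this target formula is computable in polynomial time in the size of the input: the nesting depth of the quantifiers equals the modal depth of $\phi_0$; each block $\setQBFvariableslevel{k}$ contains one variable $\propfortriangle{\expbel i \alpha}{k}$ per agent $i$ and per $\alpha\in\setrelevantformulas_i$, plus one variable per atom occurring in $\setrelevantformulas$ or $\supervarphi$, so $|\setQBFvariableslevel{k}|$ is polynomial; and each case of $tr_k$ adds only a polynomial amount, including the auxiliary formulas $\QBFrel_{i,k}=\bigwedge_{\alpha\in\setrelevantformulas_i}\big(\propfortriangle{\expbel i \alpha}{k}\limply tr_{k+1}(\alpha)\big)$ and $\descr{\st_0}{\setQBFvariableslevel0}$ (all well-defined thanks to the w.l.o.g.\ assumption on outermost $\triangle_i$-subformulas). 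Since the truth problem for quantified Boolean formulas is in \PSPACE and \PSPACE is closed under polynomial-time many-one reductions, model checking $\lang$-formulas is in \PSPACE.

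\inlongversiononly{Alternatively, one can read off membership directly from the recursive procedure $mc$ of Figure~\ref{figure:genericmc}: states are encoded as bit vectors of polynomial length, the test $\st\relstate i\st'$ and its complement are decidable in polynomial time, the loops ranging over candidate states $\st'$ visit such vectors one at a time without storing them all, and the recursion depth is bounded by the modal depth of $\phi_0$, so the call stack holds at most polynomially many frames of polynomial size.}

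For \PSPACE-hardness I would invoke the lower bound of~\cite{DBLP:journals/corr/abs-1907-09114}, where model checking in the belief base semantics is shown to be \PSPACE-hard already for the sub-language of $\lang$ built from atoms using only Boolean connectives and the ``at least'' implicit belief operators $\impbel{i}$ --- in particular without the explicit-belief modalities $\expbel{i}$ and without the ``at most'' modalities $\impbelmost{i}$. As that sub-language is syntactically contained in $\lang$ and interpreted over the same class of models, the hardness transfers immediately, and combining the two directions gives \PSPACE-completeness.

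I do not expect a genuine obstacle: the conceptual content is already packaged in Proposition~\ref{prop:tqbftrans} and in the cited hardness construction. The one point deserving explicit care is verifying that the translation $tr$ incurs only polynomial blow-up --- concretely, that the nested occurrences $tr_{k+1}(\alpha)$ inside the $\QBFrel_{i,k}$'s do not cascade. This holds because $\alpha$ ranges over the fixed finite vocabularies $\setrelevantformulas_i$ and $tr$ applied to an $\fraglang$-formula merely substitutes level-$(k+1)$ variables for explicit-belief subformulas without introducing any new quantifiers, so each such $tr_{k+1}(\alpha)$ has size linear in $|\alpha|$.
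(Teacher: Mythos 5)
Your proposal matches the paper's own argument: membership in \PSPACE{} via the polynomial-size reduction to TQBF of Proposition~\ref{prop:tqbftrans} (or, equivalently, the direct polynomial-space recursive procedure), and hardness by citing the lower bound of~\cite{DBLP:journals/corr/abs-1907-09114} for the fragment with only the $\impbel{i}$ operators. Your extra check that the $\QBFrel_{i,k}$'s do not cause a size blow-up is correct and a worthwhile detail the paper leaves implicit.
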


\section{Implementation and experimental results}%
\label{sec:expres}

We implemented a symbolic model checker,%
\footnote{Available at {https://src.koda.cnrs.fr/tiago.de.lima/lda/}}
which uses the translation to TQBF.
The resulting TQBF is then translated into a binary decision diagram (BDD), in the same way as done in \cite{DBLP:journals/logcom/BenthemEGS18}.
The program is implemented in Haskell and the BDD library used is {HasCacBDD} \cite{hascacbdd}.
It was compiled with GHC 9.2.7
in a MacBook Air with a 1.6 GHz Dual-Core Intel Core i5 processor and 16 GB of RAM,
running macOS Ventura 13.3.1.

\inJELIAversiononly{%
We formalized an example in which agents are members of a selection committee.
Votes are secret but the result of the selection is public.
The logic is used to compute what the agents can deduce from the public information given their private beliefs. 
Two variants of the example are considered,
one involving agents' first-order beliefs
and a second one involving higher-order beliefs.%
} %% END OF \inJELIAversiononly

%#States = #Vals * #MBases
%#Vals = 2^#Vars
%#Mabases = #Bases ^ #Agts
%#Bases = 2^#RelAtms

% Emiliano's AIJ example:
% 2^3 * (2^6)^2 = 2^14

% Selection committee
% 3:
% #Vars = 9
% #Agts = 3
% #RelAtms = 100
% #States = 2^9 * (2^100)^3 = 1.0429624198832569e93
% 
% S(3) = 2^9 * (2^100)^3
% S(4) = 2^16 * (2^164)^4
% S(5) = 2^25 * (2^244)^5
% S(6) = 2^36 * (2^340)^6
% S(7) = 2^49 * (2^452)^7
% S(8) = 2^64 * (2^580)^8
% S(9) = 2^81 * (2^724)^9 = 2^6597
% S(10) = 2^100 * (2^884)^10 = 2^8940

\begin{table}[ht]
\begin{center}
    \begin{tabular}{l@{\quad}r@{\quad}r@{\quad}r@{\quad}r@{\quad}r@{\quad}r@{\quad}r@{\quad}r}
        \hline
        $cands = voters =|\AGT|$             & $3$       &  $4$      &    $5$     &        $6$ &        $7$ &        $8$ &     $9$ &  $10$ \\
        %\#voters ($|\AGT|$)  & $3$       &  $4$      &    $5$     &        $6$ &        $7$ &        $8$ &     $9$ &  $10$ \\
        $|\atmset|$ & $9$       & $16$      &   $25$     &       $36$ &       $49$ &       $64$ &    $81$ & $100$\\
        $ratoms$              & $100$     &$164$      &  $244$     &      $340$ &      $452$ &      $580$ &      $724$ & $884$\\
        $states$             & $2^{309}$ & $2^{672}$ & $2^{1245}$ & $2^{2076}$ & $2^{3213}$ & $2^{4704}$ & $2^{6597}$ & $2^{8940}$\\
        \hline
        Execution time (sec.)    &   $0.076$ &   $0.015$ &    $0.026$ &    $0.047$ &    $0.066$ &    $0.101$ & $0.157$ & $0.248$\\
        \hline
    \end{tabular}
    \bigskip

% Selection committee variant
% 3:
% #Vars = 9
% #Agts = 3
% #RelAtms = 133
% #States = 2^9 * (2^133)^3 = 2^408
%
% S(3) =  2^9 * (2^133)^3 = 2^408
% S(4) = 2^16 * (2^210)^4 = 2^856
% S(5) = 2^25 * (2^305)^5 = 2^1550
% S(6) = 2^36 * (2^418)^6 = 2^2544
% S(7) = 2^49 * (2^549)^7 = 2^3892
% S(8) = 2^64 * (2^698)^8 = 2^5648
% S(9) = 2^81 * (2^865)^9 = 2^7866
% S(10) = 2^100 * (2^1050)^10 = 2^10600

    \begin{tabular}{l@{\quad}r@{\quad}r@{\quad}r@{\quad}r@{\quad}r@{\quad}r@{\quad}r@{\quad}r}
        \hline
        $cands = voters =|\AGT|$            &       $3$ &       $4$ &        $5$ &        $6$ &        $7$ &        $8$ &        $9$ &        $10$\\
        %\#voters ($|\AGT|$)  &       $3$ &       $4$ &        $5$ &        $6$ &        $7$ &        $8$ &        $9$ &        $10$\\
        $|\atmset|$ &       $9$ &      $16$ &       $25$ &       $36$ &       $49$ &       $64$ &       $81$ &       $100$\\
        $ratoms$              &     $133$ &     $210$ &      $305$ &      $418$ &      $549$ &      $698$ &      $865$ &      $1050$\\
        $states$             & $2^{408}$ & $2^{856}$ & $2^{1550}$ & $2^{2544}$ & $2^{3892}$ & $2^{5648}$ & $2^{7866}$ & $2^{10600}$\\
        \hline
        Execution time (sec.)    &   $0.081$ &   $0.063$ &    $0.334$ &    $3.066$ &   $17.588$ &   $90.809$ &         KO &          KO\\
        \hline
    \end{tabular}
\end{center}
\caption{%
    \iftoggle{long}{%
        \label{tab:experiments}Symbolic model checker performance on Examples \ref{eg:committee} (above) and \ref{eg:committee-variant} (below).%
    }{%
        \label{tab:experiments}Symbolic model checker performance on two examples.%
    }}
\end{table}

\inJELIAversiononly{Table \ref{tab:experiments} shows the performance of the model checker on the two variants of the example.}
\inlongversiononly{Table \ref{tab:experiments} shows the performance of the model checker on the examples of Section~\ref{sec:example}.}
It shows execution times for different instances.
For both examples, the size of the model ($states$) is given by the number of possible valuations times the number of possible multi-agent belief bases:
$2^{|\atmset|} \times (2^{ratoms})^{|\AGT|}$.
The value of $ratoms$ is the number of ``relevant atoms''.
There is one such atom
for each formula in $\setrelevantformulas$,
each propositional variable appearing in $\setrelevantformulas$ and in the input formula,
each formula $\alpha$ that is a sub-formula of the input formula,
plus one atom for each formula $\triangle_i\alpha$ such that $\alpha \in \setrelevantformulas$.
The number of states gives an idea of the size of the search space for modal formulas.
In principle, to check a formula of the form $\Box^o \phi$, one must check $\phi$ in every state of the model.
%The symbolic model checker does not (explicitly) explore all states of the model, but a naive model checker would do that.
Because of that, a naive implementation cannot be used.
Indeed, in our tests with such a solution, no instance could be solved under the timeout of 10 minutes.
%%%
%%The naive solution spent more than 120~sec.\ to check a modal formula of a different example in a model with $2^{14}$ states.
%%%

\iftoggle{long}{%
One can notice that the model checker is slower
in the case of $3$ candidates than in the case of $4$
candidates (and in Example~\ref{eg:committee}
the latter is true even up to $7$ candidates). 
The reason is that the input formula is true for $3$ candidates, whereas it is false on all the other cases.
Checking that a box formula is false is easier, because the checker needs to find only one state where the formula in the scope of the box operator is false.
Also note that instances of Example~\ref{eg:committee} are solved much faster than those of Example~\ref{eg:committee-variant}.
This is due to two factors.
First, Example~\ref{eg:committee-variant} has larger belief bases, which imply a larger number of states.
Second, the input formula of the second example has a larger modal depth, which obliges the checker to generate a larger search tree.%
}{% %% ELSE OF \iftoggle{long}
One can notice that the model checker is slower
in the case of $3$ candidates than in the case of $4$
candidates (and in the first example
the latter is true even up to $7$ candidates). 
The reason is that the input formula is true for $3$ candidates, whereas it is false on all the other cases.
Checking that a box formula is false is easier, because the checker needs to find only one state where the formula in the scope of the box operator is false.
Also note that instances of the first example are solved much faster than those of the second.
This is due to two factors.
First, the second example has larger belief bases, which imply a larger number of states.
Second, the input formula of the second example has a larger modal depth, which obliges the checker to generate a larger search tree.%
} %% END OF \iftoggle{long}

\inlongversiononly{%
\section{Dynamic extension}%
\label{sec:dynamic}

In this section,
we present a simple extension of 
the language $ \lang$
by dynamic operators
for modeling the agents' belief dynamics
of private type. Similar operators
were introduced in \cite{LoriniAI2020}.
The novel result
of this section 
is to show that adding them to the language
$ \lang$ 
does not increase complexity
of the model checking problem. 
More generally, we provide a simple
dynamic
extension
of the static language of 
implicitly 
believing \emph{at least}
and implicitly  believing \emph{at most}
whose model checking problem remains in PSPACE. 

The extended language
 is noted $ \langplus$ and is defined by the following grammar:
 \begin{center}\begin{tabular}{llcl}
 $\langplus \defin $ & $\phi$  & $\bnf$ & $\alpha  \mid \neg\phi \mid \phi  \wedge \phi   \mid  \impbel{i}\varphi \mid \impbelmost{i}\varphi
 \mid [+_i \alpha ]\varphi,
                        $\
\end{tabular}\end{center} 
where $\alpha$ ranges over $\fraglang$
and $i $ ranges over $\AGT $.

Events of
type
$+_i \alpha$
are called   informative events. 
In particular,
$+_i \alpha$
is the event of 
agent $i$
privately expanding its belief
base with $\alpha$. 

The formula $[+_i \alpha]\varphi$ is read 
``$\varphi$ holds after the 
informative 
event $+_i \alpha $ has occurred''.
It has  the following semantic
interpretation relative to a model. 
\begin{definition}[Satisfaction relation, cont.]\label{truthcond3}
Let $\st= ( \belbaseset_1,\ldots,\belbaseset_n, \stateval)\in 
\setbelbase$ and let $ (\st ,\iconstraint)   \in\classbelbase$. Then:
\begin{eqnarray*}
 (\st ,\iconstraint) \models [+_i \alpha] \varphi & \Longleftrightarrow & 
  (\st^{+_i \alpha} ,\iconstraint)\models\varphi ,
\end{eqnarray*}
where  $ \belbaseset_i^{+_i \alpha}= \belbaseset_i\cup \{\alpha\}
  \text{ and }
  \belbaseset_j^{+_i \alpha}= \belbaseset_j
  \text{ for all }j \neq i$.
\end{definition}
Intuitively speaking, the private belief expansion of $i$'s belief base by $\alpha$ simply consists in agent $i$
adding the information that $\alpha$ to its  belief base,
while all other agents keep their belief bases unchanged.
Let us go back to the Example 
\ref{eg:committee} we introduced in Section 
\ref{sec:example}
to illustrate the expressiveness
of our dynamic extension.
\begin{example}
It is worth noting that private belief dynamics
allow agents
to gather  new information 
and to gain new knowledge.
Suppose 
in the three-agent variant of the example
agent $2$
and agent $3$
privately learn
that $1$
voted for $2$.
This ensures that there is no longer any information asymmetry between agent $1$ and agents $2 $ and $3$.
Formally, we have 
\begin{equation*}
    (\st_0,\setbelbasecustom) \models
    \big[+_2\mathsf{vote}(1{,}2 )\big]\big[+_3 \mathsf{vote}(1{,}2 )\big] 
    \chi_0, 
\end{equation*}
where 
\begin{align*}
    \chi_0 \defin &  \bigwedge_{i \in  \{1, 2,3\}} \impbelonly{i}\psi_1,
                 \end{align*}
                 and 
                 $\psi_1$, $\st_0 $, $\setbelbasecustom$
                        are defined as  in Example \ref{eg:committee}
                        in Section 
\ref{sec:example}. 
This means that,
in the three-agent variant of the example,
after agent $2$ and agent $3$
privately learn that agent $1$
voted for $2$,
everybody 
only knows for whom
an agent 
voted and for whom 
she/he did not vote.
\end{example}

As the following proposition highlights,  
we have reduction principles for
the dynamic operators. 
\begin{proposition}\label{theo:reductionprinciples}
The
following equivalences are
valid for the class $\classbelbase$:
 \begin{align*}
&     [+_i\alpha] p \leftrightarrow  p \\
&    [+_i \alpha] \neg \varphi  \leftrightarrow   \neg [+_i\alpha]  \varphi   \\
&    [+_i  \alpha] (\varphi_1 \wedge \varphi_2) \leftrightarrow \big(  [+_i  \alpha] \varphi_1 \wedge
 [+_i \alpha] \varphi_2 \big) \\
    &    [+_i \alpha] \expbel{j}\beta \leftrightarrow  \expbel{j}\beta \text{ if } i \neq j \text{ or } \alpha \neq \beta \\
      &    [+_i \alpha] \expbel{i}\alpha \leftrightarrow \top \\
  &   [+_i  \alpha] \impbel{j}\varphi \leftrightarrow \impbel{j}\varphi   \text{ if } i \neq j  \\
    &   [+_i \alpha] \impbel{i}\varphi \leftrightarrow \impbel{i} (\alpha \rightarrow \varphi  )\\
  &   [+_i  \alpha] \impbelmost{j}\varphi \leftrightarrow \impbelmost{j}\varphi   \text{ if } i \neq j  \\
    &   [+_i \alpha] \impbelmost{i}\varphi \leftrightarrow 
    \big( \impbel{i}(\neg \alpha \rightarrow \varphi) \wedge \impbelmost{i}\varphi \big)
%  & \models_{ \classbelbase}    [\varphi !] \ctxop\varphi \leftrightarrow  (\varphi \rightarrow  \ctxop [\varphi !]\varphi  )
\end{align*}
\end{proposition}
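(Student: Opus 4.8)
The plan is to fix an arbitrary model $(\st,\iconstraint)\in\classbelbase$ with $\st=(\belbaseset_1,\ldots,\belbaseset_n,\stateval)$ and verify each equivalence directly from Definition~\ref{truthcond3}, which reduces $(\st,\iconstraint)\models[+_i\alpha]\varphi$ to $(\st^{+_i\alpha},\iconstraint)\models\varphi$. Everything rests on a single bookkeeping observation about the update $\st\mapsto\st^{+_i\alpha}$, which I would record first: the context $\iconstraint$ is untouched, the valuation of $\st^{+_i\alpha}$ is again $\stateval$, and -- spelling out Definition~\ref{DefAlternative} for $\belbaseset_i^{+_i\alpha}=\belbaseset_i\cup\{\alpha\}$ -- for every $\st'\in\setbelbase$ we have $\st^{+_i\alpha}\relstate{j}\st'$ iff $\st\relstate{j}\st'$ when $j\neq i$, while $\st^{+_i\alpha}\relstate{i}\st'$ iff ($\st\relstate{i}\st'$ and $\st'\models\alpha$). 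Dually, $\relstatecomp{j}$ being the complement of $\relstate{j}$, the analogous statements hold for $\relstatecomp{j}$. Note also that $\st'\models\alpha$ iff $(\st',\iconstraint)\models\alpha$ for $\alpha\in\fraglang$, which I will use freely.

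Given this, the Boolean principles are immediate: $p$ holds at $\st^{+_i\alpha}$ iff $p\in\stateval$ iff $p$ holds at $\st$, and since $\st\mapsto\st^{+_i\alpha}$ is a total function the dynamic operator commutes with $\neg$ and distributes over $\wedge$. The explicit-belief principles read off the definition of $\belbaseset_j^{+_i\alpha}$: if $i\neq j$ or $\alpha\neq\beta$ then $\beta\in\belbaseset_j^{+_i\alpha}$ iff $\beta\in\belbaseset_j$, whereas $\alpha\in\belbaseset_i\cup\{\alpha\}$ unconditionally, giving $[+_i\alpha]\expbel{i}\alpha\leftrightarrow\top$. The ``other agent'' modal principles $[+_i\alpha]\impbel{j}\varphi\leftrightarrow\impbel{j}\varphi$ and $[+_i\alpha]\impbelmost{j}\varphi\leftrightarrow\impbelmost{j}\varphi$ (for $i\neq j$) follow from the first clause of the observation above: since $\iconstraint$ and the relevant relation are unchanged and the successor states $\st'$ range over the same $\iconstraint$, the two sides unfold to literally the same quantified statement, so no induction on $\varphi$ is needed.

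For $[+_i\alpha]\impbel{i}\varphi\leftrightarrow\impbel{i}(\alpha\rightarrow\varphi)$ I would use the second clause: $(\st^{+_i\alpha},\iconstraint)\models\impbel{i}\varphi$ says that for all $\st'\in\iconstraint$, if $\st\relstate{i}\st'$ and $\st'\models\alpha$ then $(\st',\iconstraint)\models\varphi$; pulling the condition $\st'\models\alpha=(\st',\iconstraint)\models\alpha$ into the consequent, this becomes ``$\st\relstate{i}\st'$ implies $(\st',\iconstraint)\models\alpha\rightarrow\varphi$'', i.e.\ $(\st,\iconstraint)\models\impbel{i}(\alpha\rightarrow\varphi)$. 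The last principle is the one I expect to require care. Here $(\st^{+_i\alpha},\iconstraint)\models\impbelmost{i}\varphi$ unfolds, via $\st^{+_i\alpha}\relstatecomp{i}\st'$ iff $\neg(\st\relstate{i}\st'\text{ and }\st'\models\alpha)$, into: for all $\st'\in\iconstraint$, if $\st\relstatecomp{i}\st'$ or $\st'\not\models\alpha$ then $(\st',\iconstraint)\models\varphi$. The remaining step is purely propositional: using $\neg A\vee\neg B\equiv\neg A\vee(A\wedge\neg B)$ and the fact that an implication with a disjunctive antecedent splits into a conjunction, this is equivalent to the conjunction of ``for all $\st'\in\iconstraint$, $\st\relstatecomp{i}\st'$ implies $(\st',\iconstraint)\models\varphi$'' -- which is $\impbelmost{i}\varphi$ -- and ``for all $\st'\in\iconstraint$, $\st\relstate{i}\st'$ and $\st'\not\models\alpha$ imply $(\st',\iconstraint)\models\varphi$'' -- which, exactly as in the previous case, is $\impbel{i}(\neg\alpha\rightarrow\varphi)$. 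The main obstacle is precisely checking that this rearrangement is an equivalence rather than merely one implication (the absorption identity is what makes it tight); once that is in place the proof is complete.
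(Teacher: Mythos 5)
Your proposal is correct and follows essentially the same route as the paper's proof: both rest on the observation that the update leaves $\iconstraint$, the valuation, and the relations of agents $j\neq i$ untouched while refining $\relstate{i}$ to $\st\relstate{i}\st'$ and $\st'\models\alpha$, and both derive the $\impbel{i}$ and $\impbelmost{i}$ cases by unfolding and the same propositional rearrangement (the paper states the absorbed disjunct $\st\relstatecomp{i}\st'$ or ($\st\relstate{i}\st'$ and $\st'\models\neg\alpha$) directly, which you justify via the absorption identity). The paper dismisses the remaining cases as straightforward, exactly as you treat them.
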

\begin{proof}
We only prove  cases 
$\impbel{i}\varphi$
and
$\impbelmost{i}\varphi$, 
since  other cases are straightforward. 
\begin{eqnarray*}
  (\st ,\iconstraint)\models 
  [+_i \alpha] \impbel{i}\varphi
  & \Longleftrightarrow &
  (\st^{+_i \alpha} ,\iconstraint) \models 
   \impbel{i}\varphi,\\
   & \Longleftrightarrow &
\forall 
\st' \in  \iconstraint:
 \text{if } \st^{+_i \alpha} \relstate{i} \st'
      \text{ then }  ( \st' , \iconstraint) \models \varphi,\\
         & \Longleftrightarrow &
\forall 
\st' \in  \iconstraint:
 \text{if } \st \relstate{i} \st'
 \text{ and }
 \st' \models \alpha 
      \text{ then }  ( \st' , \iconstraint) \models \varphi,\\
         & \Longleftrightarrow &
 ( \st , \iconstraint) \models \impbel{i} (\alpha \rightarrow \varphi  ) .
\end{eqnarray*}
\begin{eqnarray*}
  (\st ,\iconstraint)\models 
  [+_i \alpha] \impbelmost{i}\varphi
  & \Longleftrightarrow &
  (\st^{+_i \alpha} ,\iconstraint) \models 
   \impbelmost{i}\varphi,\\
   & \Longleftrightarrow &
\forall 
\st' \in  \iconstraint:
 \text{if } \st^{+_i \alpha}\relstatecomp{i} \st'
      \text{ then }  ( \st' , \iconstraint) \models \varphi,\\
         & \Longleftrightarrow &
\forall 
\st' \in  \iconstraint:
 \text{if } \st \relstatecomp{i} \st'
 \text{ or }(
 \st \relstate{i} \st'
 \text{ and }
 \st' \models \neg \alpha )
      \text{ then } \\
     & &( \st' , \iconstraint) \models \varphi,\\
        & \Longleftrightarrow &
 ( \st , \iconstraint) \models      \impbel{i}(\neg \alpha \rightarrow \varphi) \wedge \impbelmost{i}\varphi  . 
  \ \ \ \ \ \ 
   \ \ \ \ \ \ 
    \ \ \ \ \ \ 
     \ \ \ \ \ \ 
      \ \ 
 \qed 
\end{eqnarray*}

\end{proof}

Model checking
for  
formulas in the language
$ \langplus$ is analogous to 
model checking
for
formulas
in 
$ \lang$
we defined in Section \ref{sec:mc}.
The  valid equivalences 
in Proposition 
\ref{theo:reductionprinciples}
could  be used
to find a procedure for reducing model checking
for  formulas
in 
$ \langplus$
to
model checking
for  formulas
in 
$ \lang$.
The problem is that such reduction
is exponential
due to the fact that every time
we find a formula
of type
$ [+_i \alpha] \impbelmost{i}\varphi$
we have to duplicate
it into two parts
$\impbel{i}(\neg \alpha \rightarrow \varphi) $
and $\impbelmost{i}\varphi$. 

Fortunately we can easily adapt the generic algorithm 
presented in Section \ref{sec:mc} in order to obtain a PSPACE procedure for model checking formulas
of the language $ \langplus$.
It  is sufficient to add the following case 
for the dynamic operators 
to the main routine of the algorithm in Figure  \ref{figure:genericmc}:
\begin{align*}
    \mathbf{case} \   [+_i \alpha]\psi : 	\mathbf{return} \ \mc {\st^{+_i \alpha}} \Gamma \psi  
\end{align*}
The resulting algorithm clearly runs in polynomial space.
 Thus, we can generalize
 the complexity result
 given in Theorem
 \ref{theo:comp}
 to the language $\langplus $.
\begin{theorem}
Model checking $\langplus$-formulas
is PSPACE-complete. 
\end{theorem}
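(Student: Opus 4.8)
The plan is to handle the two directions of the completeness claim separately. \textbf{PSPACE-hardness} needs no new work: $\lang$ is a syntactic fragment of $\langplus$ on which the two logics coincide (the truth conditions of Definitions~\ref{satrel} and~\ref{truthcond2} are reused verbatim in $\langplus$), so the PSPACE-hardness half of Theorem~\ref{theo:comp} transfers directly. All the content is therefore in \textbf{membership in PSPACE}.

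For membership I would extend the recursive procedure $mc$ of Figure~\ref{figure:genericmc} with exactly the clause anticipated above, $\mathbf{case}\ [+_i\alpha]\psi :\ \mathbf{return}\ \mc{\st^{+_i\alpha}}{\setrelevantformulas}{\psi}$, where $\st^{+_i\alpha}$ is the updated state of Definition~\ref{truthcond3}; note that the context parameter $\setrelevantformulas$ is \emph{not} altered by the call, matching the fact that $\iconstraint$ is fixed in Definition~\ref{truthcond3}. Correctness of the new clause is immediate from that definition, so the structural induction that justifies the static algorithm (the argument underlying Theorem~\ref{theo:comp}, with Proposition~\ref{prop:tqbftrans}) extends by one routine case. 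The point that genuinely needs care is the space bound, and specifically the state representation. In the static algorithm a state is a bit vector carrying one bit per element of each $\setrelevantformulas_i$ plus the valuation on the atoms of $\setrelevantformulas$ and $\supervarphi$, which is polynomial because $\setrelevantformulas$ is fixed; but applying $+_i\alpha$ may insert into $B_i$ a formula $\alpha\notin\setrelevantformulas_i$, so the representation must be widened. The fix I would use is to precompute, once, the finite sets $E_i=\{\alpha : [+_i\alpha]\psi \text{ is a subformula of }\supervarphi\}$, so $|E_i|\le|\supervarphi|$, and to store a state as a bit vector over $\setrelevantformulas_i\cup E_i$ for each agent together with the valuation. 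Every state reached during the recursion is obtained from $\st_0$ by finitely many insertions of formulas drawn from the $E_i$, hence is representable, and the vector stays polynomial in the input size. The loops of the $\impbel{i}$ and $\impbelmost{i}$ cases still enumerate exactly the states $\st'\in\setbelbasecustom$ — that is, the bit vectors whose $(E_i\setminus\setrelevantformulas_i)$-bits are all $0$ for every agent $i$ — and the relation tests $\st\relstate{i}\st'$ and $\st\relstatecomp{i}\st'$ are simply evaluated against the current belief bases recorded in $\st$ (possibly enlarged by earlier $+_i$ events), which remains a polynomial-time computation. Since the recursion depth is bounded by the nesting depth of $\supervarphi$ and each activation record is of polynomial size, the whole procedure runs in polynomial space; being deterministic, it needs no appeal to Savitch's theorem.

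The obstacle I would flag is the tempting but wrong shortcut of eliminating the dynamic operators up front via the reduction principles of Proposition~\ref{theo:reductionprinciples}: the equivalence $[+_i\alpha]\impbelmost{i}\varphi\leftrightarrow(\impbel{i}(\lnot\alpha\to\varphi)\land\impbelmost{i}\varphi)$ duplicates $\varphi$, so iterating the rewriting on a deeply nested formula blows it up exponentially. Carrying the updated belief base through the recursion, rather than pushing the update into the formula, is precisely the design choice that keeps the procedure in polynomial space, and it is the one point the proof must make explicit.\qed
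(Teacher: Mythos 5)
Your proposal is correct and follows essentially the same route as the paper: the paper likewise dismisses the reduction-axiom rewriting as exponential and instead adds the single clause $\mathbf{case}\ [+_i\alpha]\psi : \mathbf{return}\ mc(\st^{+_i\alpha},\Gamma,\psi)$ to the generic algorithm of Figure~\ref{figure:genericmc}, inheriting hardness from the static fragment. The one place you go beyond the paper is in spelling out why the space bound survives the update --- widening the state representation to bit vectors over $\setrelevantformulas_i\cup E_i$ with $E_i$ the formulas occurring in dynamic operators of $\supervarphi$ --- a detail the paper compresses into ``clearly runs in polynomial space,'' and which is worth making explicit since updated states need no longer lie in $\setbelbasecustom$.
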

} %% END of \inlongversiononly

\section{Conclusion}%
\label{sec:conclusion}

This paper describes optimal procedures for model checking multi-agent only believing formulas.  
As far as we know, we are the first to tackle the problem of automating model checking for the logic of multi-agent only believing or knowing.
We implemented these procedures and presented some experimental results on computation time. 
\iftoggle{long}{%
Moreover, we extended the formalism with private belief expansion operators and showed that model checking remains PSPACE-complete. 
In the future, we plan to implement the dynamic extension
presented in Section \ref{sec:dynamic}
and to extend the setting to introspective agents whose logic of belief (resp.\ knowledge) is K45 (resp.\ S5).
Last but not least, we
}{% %% ELSE OF \iftoggle{long}
We
}% %% END OF \iftoggle{long}
intend to apply our semantics for multi-agent only believing and model checking approach to epistemic planning. 
We believe that the compactness of our semantics can offer an advantage in terms of ease of implementation compared to the multi-relational Kripke semantics traditionally used in the context of epistemic planning
\cite{BolanderAndersen,DBLP:conf/ijcai/BolanderJS15}.
 
%
% ---- Bibliography ----
%
% BibTeX users should specify bibliography style 'splncs04'.
% References will then be sorted and formatted in the correct style.
%
\bibliographystyle{splncs04}
\bibliography{biblio,biblio2}
\end{document}